\newcommand{\sref}[1]{Section~\ref{sec:#1}}
\newcommand{\fref}[1]{Figure~\ref{fig:#1}}
\newtheorem{theorem}{Theorem}
\DeclareSymbolFont{extraup}{U}{zavm}{m}{n}
\DeclareMathSymbol{\varclub}{\mathalpha}{extraup}{84}
\DeclareMathSymbol{\vardiamond}{\mathalpha}{extraup}{87}
\title{Maximum Spanning Trees Are Invariant to Temperature Scaling\\in Graph-based Dependency Parsing}
\author{Stefan Grünewald\\
	Institut für Maschinelle Sprachverarbeitung, University of Stuttgart\\
	Bosch Center for Artificial Intelligence, Renningen, Germany\\
	\texttt{stefan.gruenewald@de.bosch.com}
}
\date{\today}
\begin{document}
\maketitle
\begin{abstract}
Modern graph-based syntactic dependency parsers operate by predicting, for each token within a sentence, a probability distribution over its possible syntactic heads (i.e., all other tokens) and then extracting a maximum spanning tree from the resulting log-probabilities.
Nowadays, virtually all such parsers utilize deep neural networks and may thus be susceptible to miscalibration (in particular, overconfident predictions).
In this paper, we prove that \textit{temperature scaling}, a popular technique for post-hoc calibration of neural networks, cannot change the output of the aforementioned procedure.
We conclude that other techniques are needed to tackle miscalibration in graph-based dependency parsers in a way that improves parsing accuracy.
\end{abstract}

\section{Introduction}
\textit{Syntactic dependency parsing} refers to the task of predicting, for a given sentence, the grammatical relations between its tokens.
Most commonly, the output is a \textit{dependency tree}, i.e., a graph structure in which each token constitutes a node and is assigned exactly one parent (its syntactic head). The parent may be either one of the other words in the sentence or an additional, implicit \textsc{Root} node. 
For the dependency tree to be valid, each token must be reachable from \textsc{Root}.\footnote{In practice, there is often the additional constraint that \textsc{Root} can only have one outgoing edge. We will discuss this later in the paper.}
\fref{dep_example} shows such a structure for the sentence \enquote{Mary likes fluffy cats.}\footnote{Edges in a dependency tree may additionally be labelled according to the grammatical functions they represent. For example, because \textit{cats} is the grammatical object of \textit{likes}, the edge between them may be labelled \textit{obj}. We will ignore edge labels for the remainder of this paper, as they are not relevant to our discussion.}

\begin{figure}[h]
\centering
\begin{minipage}{.5\textwidth}
  \centering
      \begin{dependency}[theme=simple, arc edge, arc angle=80]
		\begin{deptext}
			Mary \& likes \& fluffy \& cats\\
		\end{deptext}
		\deproot[edge height=4em, label style={font=\normalsize}]{2}{\textsc{Root}}
		\depedge[hide label]{2}{1}{nsubj}
		\depedge[hide label]{2}{4}{obj}
		\depedge[hide label]{4}{3}{amod}
	  \end{dependency}
    \captionof{figure}{Example dependency tree.}
    \label{fig:dep_example}
\end{minipage}%
\begin{minipage}{.5\textwidth}
  \centering
    \footnotesize
    \bgroup
    \def\arraystretch{1.19}
    \begin{tabular}{l|ccccc}
         & \textsc{Root} & Mary & likes & fluffy & cats \\
         \midrule
      Mary   & $0.01$ & $0.02$ & $0.88$ & $0.07$ & $0.02$\\
      likes  & $0.95$ & $0.01$ & $0.00$ & $0.03$ & $0.01$\\ 
      fluffy & $0.09$ & $0.13$ & $0.05$ & $0.02$ & $0.71$\\ 
      cats   & $0.03$ & $0.10$ & $0.74$ & $0.12$ & $0.01$\\ 
    \end{tabular}
    \egroup
    \captionof{figure}{Syntactic head probabilities.}
    \label{fig:head_probs}
\end{minipage}
\end{figure}

\textit{Graph-based dependency parsing} is a technique for predicting dependency trees.
In its usual formulation, the approach entails using some machine learning classifier to predict, for each token within the input sentence, a probability distribution over its possible syntactic heads (i.e., all other tokens in the sentence, as well as \textsc{Root}), as shown in the rows of \fref{head_probs}.
In a second step, the logarithms of the resulting probabilities are then interpreted as edge weights between pairs of nodes corresponding to tokens and a maximum spanning tree is extracted, using, e.g., the Chu-Liu/Edmonds algorithm \cite{chuliu1965shortest,edmonds1967optimum}.
This maximum spanning tree is then returned as the dependency tree for the input sentence.

The \textit{calibration} of a machine learning classifier refers to its ability to generate output probabilities that are representative of the actual correctness likelihoods.
For example, in a well-calibrated dependency parser, roughly 80\,\% of edges that are predicted with a probability of $0.8$ should actually be present in the gold data. 
However, as \newcite{guo2017calibration} show, modern neural networks are often miscalibrated and prone to overconfident predictions.
In the context of graph-based dependency parsing, this would mean that the probability distributions over syntactic heads are too concentrated on the parent considered the most likely, not properly reflecting the uncertainty in the syntactic attachment of the tokens.
Since the final dependency trees are extracted using an MST algorithm that operates on the (log-)probabilities returned by the classifier (and not merely the highest-scoring parent for each token), overconfidence in the syntactic attachments of the tokens might not only lead to overconfidence in the parse itself, but also to incorrect parses. Thus, good estimates of token attachment uncertainty may not only improve the uncertainty estimates of the final trees, but also their accuracy.

One popular technique for post-hoc calibration of neural networks is \textit{temperature scaling} \cite{guo2017calibration}.
This approach works by dividing the unnormalized output values (\enquote{logits}) of the neural network by a constant $T \in \mathbb{R}_{>0}$ (the \enquote{softmax temperature}) before applying the softmax function, i.e., given a vector $\mathbf{x}$ of unnormalized scores, the vector of probabilities is computed as $\text{softmax}(\mathbf{x}/T)$ rather than simply $\text{softmax}(\mathbf{x})$. $T$ is optimized w.r.t. negative log-likelihood on a validation set and usually $T > 1$, resulting in a \enquote{smoother} probability distribution (i.e., one with higher entropy, or higher uncertainty) that corrects for the model's overconfidence.

In this paper, we show that applying temperature scaling to the logits of a graph-based dependency parser (i.e., the unnormalized values of the head probability distributions; the input values for the rows in \fref{head_probs}) does not change the resulting dependency trees. Although it is already known that temperature scaling does not change predictions for individual classifications, this is nonetheless a somewhat surprising result since the maximum spanning trees in dependency parsing depend on \textit{all} (log-)probabilities provided by the underlying classifier.
We conclude that calibration methods that simply re-scale softmax probabilities are not suited to improve the accuracy of graph-based dependency parsers.

The remainder of this paper is structured as follows.
\sref{definitions} sets up definitions for temperature scaling in graph-based dependency parsing;
\sref{main_proof} proves our main claim;
\sref{conclusion} concludes the paper.

\section{Definitions}
\label{sec:definitions}

Let $V = \{ v_1, \ldots v_n \}$ be a set of nodes.
In practice, these are the individual tokens of the sentence, as well as \textsc{Root}.

Let $X \in \mathbb{R}^{n \times n}$ a matrix of scores between nodes.\footnote{In practice, the matrix row corresponding to the \textsc{Root} token is often ommitted, i.e., $X \in \mathbb{R}^{(n-1) \times n}$ (see \fref{head_probs}). This is because \textsc{Root} will always constitute the root of the dependency tree and thus not have a parent of its own, making the weights of incoming edges irrelevant. However, this has no impact on our overall argument.}
These are the unnormalized output values (\enquote{logits}) of the dependency classifier.

Given $X$ and a \enquote{softmax temperature} $T \in \mathbb{R}_{>0}$, we define a parametrized weight function $w_{X,T}: V^2 \mapsto \mathbb{R}$ as 
\begin{align}
    w_{X,T}(v_i, v_j) &= \log \left( \frac{e^{\frac{x_{ij}}{T}}}{\sum_{k=1}^n e^{\frac{x_{kj}}{T}}} \right)\\
                  &= \log \left( e^{\frac{x_{ij}}{T}} \right) - \log \left( \sum_{k=1}^n e^{\frac{x_{kj}}{T}} \right)\\
                  &= \frac{x_{ij}}{T} - \log \left( \sum_{k=1}^n e^{\frac{x_{kj}}{T}} \right)
\end{align}
$w_{X,T}$ assigns weights to all edges between two nodes; for each node $v_j$, the weights of incoming edges are the log-softmax values of the unnormalized scores in the corresponding row of $X$. For $T = 1$, this is the standard approach in graph-based dependency parsing.

$D = \langle V, V^2 \rangle$ and $w_{X,T}$ form a complete, directed, weighted graph.
Given $D$ and a designated root node $r \in V$, an \textit{arborescence} (directed spanning tree) is a subgraph $A = (r, V, E)$ with $E \subseteq V^2$ such that (a) each non-root node has exactly one incoming edge, and (b) $A$ has no cycles.

Without loss of generality, assume that $r = v_1$. An arborescence $A$ then induces a predecessor function $\pi: \{2, \ldots, n\} \mapsto \{1, \ldots, n\}$ that maps the index of each non-root node to the index of its parent. 
We can now define the weight of $A$ as the sum of its edges:
\begin{equation}
    w_{X,T}(A) = \sum_{i=2}^n w_{X,T}(v_{\pi(i)}, v_i)
\end{equation}
and we call $A$ a maximum arborescence of $(D, r)$ and $w_{X,T}$ if $w_{X,T}(A) \geq w_{X,T}(A')$ for all arborescences $A'$ of $(D, r)$.

\section{Proof of Main Claim}
\label{sec:main_proof}

Our main claim is that a maximum arborescence $A$ of $(D, r)$ and some $w_{X,T}$ is also a maximum arborescence of $(D, r)$ and any other $w_{X,T'}$.
To prove this, we first prove the following, more general statement:
\begin{theorem}
\label{thrm:generalized_claim}
If $A$ and $A'$ are arborescences of $(D, r)$ and $w_{X,T}(A) \geq w_{X,T}(A')$ for some $T \in \mathbb{R}_{>0}$, then $w_{X,T'}(A) \geq w_{X,T'}(A')$ for all $T' \in \mathbb{R}_{>0}$.
\end{theorem}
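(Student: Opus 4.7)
The plan is to reduce the inequality $w_{X,T}(A) \geq w_{X,T}(A')$ to an equivalent inequality on the raw logits that makes no reference to $T$, and then to observe that the positivity of $T$ makes the translation between the two inequalities direction-preserving.

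First I would expand each arborescence weight using the third form of the weight function from the excerpt. Writing $\pi$ and $\pi'$ for the predecessor functions of $A$ and $A'$, each edge contribution $w_{X,T}(v_{\pi(i)}, v_i)$ splits into a data-dependent term $x_{\pi(i),i}/T$ and a log-normalizer $\log\bigl(\sum_{k} e^{x_{ki}/T}\bigr)$. The key structural observation is that this log-normalizer depends only on the target index $i$, not on the chosen parent. Since any arborescence assigns exactly one incoming edge to each non-root node $v_2, \ldots, v_n$, the total contribution of log-normalizers to $w_{X,T}(A)$ equals $\sum_{i=2}^n \log\bigl(\sum_k e^{x_{ki}/T}\bigr)$ regardless of which parents $\pi$ selects, and the same holds for $A'$.

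Consequently, when one subtracts $w_{X,T}(A')$ from $w_{X,T}(A)$, the log-normalizer contributions cancel in full and what remains is
\begin{equation*}
w_{X,T}(A) - w_{X,T}(A') \;=\; \frac{1}{T}\sum_{i=2}^n \bigl( x_{\pi(i),i} - x_{\pi'(i),i} \bigr).
\end{equation*}
Because $T > 0$, the sign of the left-hand side agrees with the sign of $S := \sum_{i=2}^n (x_{\pi(i),i} - x_{\pi'(i),i})$. But $S$ is a quantity built entirely from the raw logits, with no dependence on the temperature. Therefore the inequality $w_{X,T}(A) \geq w_{X,T}(A')$ is equivalent to $S \geq 0$ for every $T \in \mathbb{R}_{>0}$; if it holds for some $T$, it holds for all $T'$, which is exactly the theorem.

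I do not expect any serious obstacle. The only non-routine step is noticing that the softmax denominators contribute identically to every arborescence, a consequence of the bijection between non-root nodes and incoming edges in a spanning tree; once that cancellation is spotted, the remainder is just the elementary observation that dividing by a positive $T$ preserves the direction of an inequality.
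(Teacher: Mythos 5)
Your proposal is correct and follows essentially the same route as the paper's own proof: expanding the edge weights, cancelling the log-normalizers (which depend only on the target node), and observing that the remaining difference is $\frac{1}{T}$ times a temperature-independent sum, so that $T > 0$ preserves the sign. Your explicit remark that the cancellation rests on each non-root node having exactly one incoming edge is a nice touch the paper leaves implicit, but the argument is the same.
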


\begin{proof}
We note the following equality:

\begin{align}
w_{X,T}(A) - w_{X,T}(A') &= \sum_{i=2}^n w_{X,T}(v_{\pi(i)}, v_i) - \sum_{i=2}^n w_{X,T}(v_{\pi'(i)}, v_i) \\
             &= \sum_{i=2}^n \left( w_{X,T}(v_{\pi(i)}, v_i) - w_{X,T}(v_{\pi'(i)}, v_i) \right) \\
             &= \sum_{i=2}^n \left( \frac{x_{\pi(i),i}}{T} - \log \left( \sum_{k=1}^n e^{\frac{x_{ki}}{T}} \right) - \frac{x_{\pi'(i),i}}{T} + \log \left( \sum_{k=1}^n e^{\frac{x_{ki}}{T}} \right) \right) \\
             &= \sum_{i=2}^n  \left( \frac{x_{\pi(i),i}}{T} - \frac{x_{\pi'(i),i}}{T} \right) \\
             &= \frac{1}{T} \sum_{i=2}^n \left( x_{\pi(i),i} - x_{\pi'(i),i} \right)
\end{align}

Therefore, if $w_{X,T}(A) - w_{X,T}(A') \geq 0$, then $w_{X,T'}(A) - w_{X,T'}(A') \geq 0$ for all $T' \in \mathbb{R}_{>0}$.
Equivalently, if $w_{X,T}(A) \geq w_{X,T}(A')$, then $w_{X,T'}(A) \geq w_{X,T'}(A')$ for all $T' \in \mathbb{R}_{>0}$.
\end{proof}

We can now use the above result to prove our main claim:

\begin{theorem}
\label{thrm:main_claim}
A maximum arborescence $A$ of $(D, r)$ for a given $w_{X,T}$ is also a maximum arborescence of $(D, r)$ for any other $w_{X,T'}$ (with $T, T' \in \mathbb{R}_{>0}$).
\end{theorem}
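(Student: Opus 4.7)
The plan is to derive Theorem~\ref{thrm:main_claim} as a direct corollary of Theorem~\ref{thrm:generalized_claim}, since the former is essentially a quantifier manipulation of the latter. I would not need any new machinery beyond what has already been established.

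First I would unfold the definition of ``maximum arborescence'': $A$ is a maximum arborescence of $(D, r)$ for $w_{X,T}$ iff $w_{X,T}(A) \geq w_{X,T}(A')$ for every arborescence $A'$ of $(D, r)$. Then I would fix an arbitrary $T' \in \mathbb{R}_{>0}$ and an arbitrary arborescence $A'$, and apply Theorem~\ref{thrm:generalized_claim} to the pair $(A, A')$: since $w_{X,T}(A) \geq w_{X,T}(A')$ by the assumption that $A$ is maximum under $w_{X,T}$, Theorem~\ref{thrm:generalized_claim} immediately yields $w_{X,T'}(A) \geq w_{X,T'}(A')$. Because $A'$ was arbitrary, $A$ is a maximum arborescence of $(D, r)$ under $w_{X,T'}$.

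There is really no obstacle here to speak of; the substantive content already lives in Theorem~\ref{thrm:generalized_claim}, which showed that the difference $w_{X,T}(A) - w_{X,T}(A')$ factors as $\tfrac{1}{T}$ times a $T$-independent quantity, so the sign of the difference is invariant under changes of temperature. The only thing to be careful about in the write-up is that the quantifier over $A'$ is bound inside the definition of ``maximum,'' so one must make clear that the implication from Theorem~\ref{thrm:generalized_claim} is being applied once for every competing arborescence $A'$, with the same $A$ fixed throughout. Symmetrically, by swapping the roles of $T$ and $T'$, one also obtains the converse direction, so the set of maximum arborescences is in fact identical for all choices of $T \in \mathbb{R}_{>0}$, which is the practically relevant statement for the paper's claim about MST invariance under temperature scaling.
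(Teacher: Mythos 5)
Your proposal is correct and follows essentially the same route as the paper: unfold the definition of maximum arborescence and apply Theorem~\ref{thrm:generalized_claim} once for each competing arborescence $A'$ with $A$ fixed. The additional observation about swapping $T$ and $T'$ to get the converse is a nice touch but does not change the substance of the argument.
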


\begin{proof}
Since $A$ is a maximum arborescence of $(D, r)$ and $w_{X,T}$, it holds by definition that $w_{X,T}(A) \geq w_{X,T}(A')$ for all arborescences $A'$ of $(D, r)$. From Theorem \ref{thrm:generalized_claim} it follows immediately that $w_{X,T'}(A) \geq w_{X,T'}(A')$ for all $A'$ also for any other $w_{X,T'}$.
\end{proof}

We have thus proven that applying temperature scaling to the logits of a graph-based dependency parser does not change the dependency tree returned by the overall system (assuming that maximum overall edge weight is used as the selection criterion).

\paragraph{Additional structural constraints.}
In practice, additional structural constraints are often imposed on dependency trees; most commonly, there may be only one edge emanating from the root node of the tree \cite{zmigrod-etal-2020-please}.
We note that our result is unaffected by these situations, as in this case, $A$ is simply a maximum arborescence chosen from a restricted set (i.e., all arboresences that also fulfill the additional structural criterion).
However, Theorem \ref{thrm:generalized_claim} still applies for all members of this restricted set, meaning that $A$ will still be the maximum arborescence from this set for any $w_{X,T'}$.

\section{Conclusion}
\label{sec:conclusion}

In this paper, we have proven that temperature scaling, a popular technique for post-hoc calibration of neural network classifiers, does not have any effect on the output of graph-based syntactic dependency parsers.
We conclude that more sophisticated methods -- in particular, those that may also change class predictions instead of only re-scaling output probabilities -- may be needed for tackling miscalibration in dependency parsers in ways that may lead to improved accuracy.
Investigating such methods could be a promising direction for future work.

\section*{Acknowledgements}
The author thanks Annemarie Friedrich and Sophie Henning for their helpful suggestions and feedback.

% include your own bib file like this:
\bibliographystyle{coling}
\bibliography{coling2020}

\begin{thebibliography}{}

\bibitem[\protect\citename{Chu and Liu}1965]{chuliu1965shortest}
Yoeng-Jin Chu and Tseng-Hong Liu.
\newblock 1965.
\newblock On the shortest arborescence of a directed graph.
\newblock {\em Scientia Sinica}, 14:1396--1400.

\bibitem[\protect\citename{Edmonds}1967]{edmonds1967optimum}
Jack Edmonds.
\newblock 1967.
\newblock Optimum branchings.
\newblock {\em Journal of Research of the national Bureau of Standards B},
  71(4):233--240.

\bibitem[\protect\citename{Guo \bgroup et al.\egroup }2017]{guo2017calibration}
Chuan Guo, Geoff Pleiss, Yu~Sun, and Kilian~Q Weinberger.
\newblock 2017.
\newblock On calibration of modern neural networks.
\newblock In {\em Proceedings of the 34th International Conference on Machine
  Learning-Volume 70}, pages 1321--1330.

\bibitem[\protect\citename{Zmigrod \bgroup et al.\egroup
  }2020]{zmigrod-etal-2020-please}
Ran Zmigrod, Tim Vieira, and Ryan Cotterell.
\newblock 2020.
\newblock Please mind the root: {D}ecoding arborescences for dependency
  parsing.
\newblock In {\em Proceedings of the 2020 Conference on Empirical Methods in
  Natural Language Processing (EMNLP)}, pages 4809--4819, Online, November.
  Association for Computational Linguistics.

\end{thebibliography}

\end{document}